\documentclass{article}

\usepackage[preprint]{neurips_2026}

\usepackage[utf8]{inputenc} %
\usepackage[T1]{fontenc}    %
\usepackage{hyperref}       %
\usepackage{url}            %
\usepackage{booktabs}       %
\usepackage{amsfonts}       %
\usepackage{nicefrac}       %
\usepackage{microtype}      %
\usepackage{xcolor}         %
\usepackage{amsmath,amssymb,amsthm,mathtools}
\usepackage{enumitem}
\usepackage{array}
\usepackage{booktabs}
\usepackage{hyperref}
\usepackage{tikz}
\usetikzlibrary{arrows.meta,positioning,calc,fit,matrix}
\title{Embedding Dimension Lower Bounds for Universality of Deep Sets and Janossy Pooling}

\usepackage{amsmath,amssymb,amsthm}
\usepackage{hyperref}
\usepackage{enumitem}

\newtheorem{theorem}{Theorem}
\newtheorem{proposition}{Proposition}
\newtheorem{corollary}{Corollary}
\newtheorem{lemma}{Lemma}

\newtheorem{definition}{Definition}

\newcommand{\Csym}{C_{\mathrm{sym}}}

\newcommand{\muniv}{M_{\mathrm{univ}}}

\author{%
  Ali Syed \\
  Department of Mathematics\\
  Texas A\&M University\\
  College Station, TX 77843 \\
  \texttt{saabidi1@tamu.edu} \\
  \And
  Aditya Nambiar \\
  Department of Mathematics\\
  Texas A\&M University\\
  College Station, TX 77843 \\
  \texttt{aditnambiar@tamu.edu} \\
  \And
  Jonathan W. Siegel\\
  Department of Mathematics\\
  Texas A\&M University\\
  College Station, TX 77843 \\
  \texttt{jwsiegel@tamu.edu} \\
}

\begin{document}

\maketitle

\begin{abstract}
  In many practical applications it is important to build symmetries into neural network architectures. Consider the important case of permutation symmetry on point clouds consisting of $n$ points in $d$ dimensions. In this case the network learns a function on a set of $n$ points in $\mathbb{R}^d$, and a natural paradigm for constructing invariant networks is Janossy pooling, which generalizes the popular Deep Sets architecture. We study the universality of this approach, in particular the important question of how large the embedding dimension must be to guarantee universality of this architecture. Specifically, using a novel technique, we prove new lower bounds on the required size of this embedding dimension. For Deep Sets, this gives the correct minimal dimension up to a constant factor for all $d > 1$. For $k$-ary Janossy pooling, we prove the first non-trivial lower bound on the required embedding dimension when $k > 1$.
\end{abstract}

\section{Introduction}
Many applications of machine learning, especially in science and engineering, involve functions which are invariant to known symmetries. One prominent example of this is permutation invariance, which involves learning functions defined on \textit{sets}, as opposed to sequences, of inputs. This motivates the development of neural network architectures which parameterize functions invariant to the action of permutations. 

One of the first such architecture to be introduced is the Deep Sets \cite{zaheer2017deep} model, defined for inputs \(\mathbf{X} = (\mathbf{x}_1,\ldots,\mathbf{x}_n)\in \mathbb{R}^{d\times n}\) by
\begin{equation}
    f(\mathbf{X}) = \rho\left(\sum_{i=1}^n \Phi(\mathbf{x}_i)\right),
\end{equation}
where \(\Phi:\mathbb{R}^d\rightarrow \mathbb{R}^M\) and \(\rho:\mathbb{R}^M\rightarrow \mathbb{R}^p\) are standard multi-layer perceptrons (MLPs). In other words, Deep Sets parameterizes a function on sets by first encoding the elements of the set into an \(M\)-dimensional latent space, aggregating these representations by summing, and finally mapping the resulting latent vector to an output in \(\mathbb{R}^p\). Deep Sets is a popular permutation invariant architecture, which forms the basis for two widely used point set classifiers, PointNet \cite{qi2017pointnet} and PointNet++ \cite{qi2017pointnet++}.

An important generalization of Deep Sets is Janossy pooling \cite{murphy2019janossy}. This paradigm allows for interactions between multiple elements of the set during the encoding step. Specifically, \(k\)-ary Janossy pooling parameterizes functions on \(\mathbf{X} = (\mathbf{x}_1,\ldots,\mathbf{x}_n)\in \mathbb{R}^{d\times n}\) by
\begin{equation}\label{eq:k-ary-janossy}
    f(\mathbf{X}) = \rho\left(\sum_{1\leq i_1,\ldots,i_k\leq n}\Phi(\mathbf{x}_{i_1},\ldots,\mathbf{x}_{i_k})\right),
\end{equation} 
where \(\Phi:\mathbb{R}^{d\times k}\rightarrow \mathbb{R}^M\) and \(\rho:\mathbb{R}^M\rightarrow\mathbb{R}^p\) are MLPs. In other words, \(k\)-ary Janossy pooling maps all ordered \(k\)-tuples of elements in the set into an \(M\)-dimensional latent space before aggregating via a sum and mapping to the output. Deep Sets corresponds to \(k\)-ary Janossy pooling for \(k=1\), while when \(k=2\), Janossy pooling allows pairwise interactions between elements of the input set, and thus bears similarity with the highly successful transformer architecture \cite{vaswani2017attention}.

An important theoretical question regarding these architectures is whether they are universal, i.e., whether any continuous permutation invariant function can be approximated to arbitrary accuracy by Deep Sets or Janossy pooling. General universality for Deep Sets and Janossy pooling has been shown in \cite{zaheer2017deep,wagstaff2022universal,qi2017pointnet,maron2019universality,dym2025low}. 

However, an important subtlety here is the embedding dimension \(M\). For Deep Sets with \(d=1\), i.e., functions on sets of real numbers, it was shown in \cite{wagstaff2022universal} that embedding dimension \(M = n\) is necessary and sufficient for universality. For Deep Sets with \(d > 1\), i.e., functions on sets of points in \(\mathbb{R}^d\), it has been shown in \cite{dym2025low} that an embedding dimension of \(M = 2nd+1\) is sufficient for universality. Obviously, the lower bound \(M \geq n\) proved in \cite{wagstaff2022universal} for \(d = 1\) still holds when \(d > 1\), but no better lower bound is known. For \(k\)-ary Janossy pooling, an embedding dimension of \(M = n\) (if \(d = 1\)) and \(M = 2nd+1\) (if \(d > 1\)) is still sufficient from the corresponding universality for Deep Sets, but no non-trivial lower bound is known.

We remark that the minimal embedding dimension required for universality may depend upon the output dimension $p$. Specifically, it is not difficult to show that the embedding dimension for $k$-ary Janossy pooling must always be at least $M \geq \min\{p,nd\}$, regardless of $k$. This is closely related to the fact that if the embedding maps $\Phi$ are \textit{fixed}, i.e., independent of the target function $f$, then we must have $M \geq nd$. We describe this in more detail in Section \ref{trivial-lower-bounds-section}. Further, all known universality constructions (except in the trivial case $k = n$ where we can take $\Phi = (1/n!)f$ and $\rho$ to be the identity) actually use embedding maps which are independent of the target function $f$. Thus, the key difficulty in the problem lies in the situation where the output dimension $p$ is fixed and the embedding maps may depend upon $f$.

Consequently, in this paper, we focus on the scalar-output case \(p=1\) with \(K=[0,1]^d\), where \(d\ge1\), \(n\ge2\), and \(1\le k\le n\). We write \(C(K^n)\) for the space of continuous real-valued functions on \(K^n\), and \(C_{\mathrm{sym}}(K^n)\) for the subspace of continuous permutation-invariant functions. We now introduce the main latent-dimension quantity studied in the paper.
\begin{definition}
    We denote by $\muniv(d,n,k)$ the minimal embedding dimension required for universality of $k$-ary Janossy pooling acting on sets of $n$ vectors in $\mathbb{R}^d$, with output dimension $p = 1$.
\end{definition}
Determining \(\muniv(d,n,k)\) for different values of \(d\), \(n\), and \(k\) is a significant open problem, highlighted in~\cite{wagstaff2022universal}. Table~\ref{tab:related-bounds} summarizes the best known upper and lower bounds in the literature, together with the new lower bounds proved in this paper.

\begin{table}[t]
\centering
\footnotesize
\setlength{\tabcolsep}{3.0pt}
\renewcommand{\arraystretch}{1.10}
\begin{tabular}{p{2.15cm}p{2.65cm}p{2.65cm}p{3.25cm}}
\toprule
Work & Setting & Upper bound & Lower bound \\
\midrule
Zaheer et al.~\cite{zaheer2017deep}
& \(d=1\), \(k=1\)
& \(\muniv\le n+1\)
& -- \\

Wagstaff et al.~\cite{wagstaff2022universal}
& \(d=1\), \(k=1\)
& \(\muniv\le n\)
& \(\muniv\ge n\) \\

Murphy et al.~\cite{murphy2019janossy}
& \(k=n\)
& \(\muniv = 1\)
& \(\muniv = 1\) \\

Dym--Gortler~\cite{dym2025low}
& \(d\geq 1\), \(k=1\)
& \(\muniv \le 2nd+1\)
& -- \\

\textbf{This paper}
& \(\mathbf{d \geq 1,\ k=1}\)
& --
& \(\mathbf{\muniv\ge d(n-1)}\) \\

\textbf{This paper}
& \(\mathbf{d \geq 1,\ k\geq1}\)
& --
& \(\mathbf{\muniv\gtrsim_k (d(n-k))^{1/k}}\) \\
\bottomrule
\end{tabular}
\vspace{5pt}
\caption{Overview of known upper and lower bounds on the pooled latent dimension for Deep Sets and Janossy pooling. Here \(n\) denotes the set size, \(d\) the ambient dimension, \(k\) the Janossy arity, and \(\muniv\) the latent dimension.}
\label{tab:related-bounds}
\end{table}

\subsection{Our Contributions}

Our main contribution is a new Borsuk-Ulam-based method for proving latent-dimension lower bounds. Applying this method gives the following results.

\begin{enumerate}
    \item For \(k\)-ary Janossy pooling, Corollary~\ref{cor:fixed-k-janossy} shows that for every fixed \(k\ge1\),
    \[
        \muniv(d,n,k)\gtrsim_k (d(n-k))^{1/k}.
    \]
    For \(k>1\), this is the first non-trivial embedding-dimension lower bound for \(k\)-ary Janossy pooling. In particular, the required latent dimension must grow with both \(d\) and \(n\).

    \item For Deep Sets, corresponding to \(k=1\), Corollary~\ref{cor:deepsets} gives
    \[
        \muniv(d,n,1)\ge d(n-1).
    \]
    When $d > 1$, this gives the first lower bound which grows with $d$.
\end{enumerate}
\section{Bounds with fixed $\Phi$ and large $p$}\label{trivial-lower-bounds-section}
In this section, we consider two easier settings: the feature map \(\Phi\) is fixed, or the output dimension \(p\) is large, with \(f:K^n\to\mathbb R^p\). And we denote the pooling latent dimension by \(M\) for this setting. In this situation it is much easier to obtain lower bounds, and we present this primarily to describe the key difficulty in determining $\muniv(d,n,k)$.

The key to this is the following simple observation.
\begin{proposition}[Fixed-feature injectivity obstruction]\label{prop:fixed-feature-injectivity}
    Suppose that the map $\Phi$ in \eqref{eq:k-ary-janossy} is \textit{fixed}, i.e., independent of $f$. Then the representation \eqref{eq:k-ary-janossy} can only be universal (note that we can now only vary $\rho$) if the map
    \begin{equation}\label{eq:fixed-feature-latent-map}
        \Psi(\mathbf{X}) := \sum_{1\leq i_1,\ldots,i_k\leq n}\Phi(\mathbf{x}_{i_1},\ldots,\mathbf{x}_{i_k}).
    \end{equation}
    is injective (modulo permutations).
\end{proposition}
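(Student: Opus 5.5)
We argue by contraposition: if $\Psi$ fails to be injective modulo permutations, then some continuous symmetric target cannot be approximated by any choice of $\rho$. Note first that $\Psi$ is permutation invariant, since permuting the columns of $\mathbf{X}$ only permutes the ordered $k$-tuples in the sum. Suppose then that there exist $\mathbf{X},\mathbf{Y}\in K^n$ with $\Psi(\mathbf{X})=\Psi(\mathbf{Y})$, but $\mathbf{Y}$ is not a column permutation of $\mathbf{X}$. Every network in the family has the form $\rho\circ\Psi$, so it satisfies $\rho(\Psi(\mathbf{X}))=\rho(\Psi(\mathbf{Y}))$ for every $\rho$. Hence all realizable functions agree at $\mathbf{X}$ and $\mathbf{Y}$.

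Next, we construct a target $f\in\Csym(K^n)$ with $f(\mathbf{X})\neq f(\mathbf{Y})$. The orbit $G\mathbf{Y}=\{\mathbf{Y}\sigma:\sigma\in S_n\}$ is finite, hence closed, and by assumption it does not contain $\mathbf{X}$. Define
\[
f(\mathbf{Z})=\min_{\sigma\in S_n}\|\mathbf{Z}\sigma-\mathbf{Y}\|,
\]
the distance from $\mathbf{Z}$ to the orbit of $\mathbf{Y}$ in any permutation-invariant norm, such as the Frobenius norm. This function is continuous, being a minimum of finitely many continuous functions. It is also symmetric, since replacing $\mathbf{Z}$ by $\mathbf{Z}\tau$ merely reindexes the minimum. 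Finally, $f(\mathbf{Y})=0$, while $f(\mathbf{X})=\delta>0$.

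Now suppose a realizable function $g=\rho\circ\Psi$ satisfies $\|f-g\|_\infty<\delta/2$. Then
\[
g(\mathbf{X})>\delta/2>g(\mathbf{Y}),
\]
which contradicts $g(\mathbf{X})=g(\mathbf{Y})$. So uniform approximation to accuracy $\delta/2$ is impossible, and the representation is not universal. This holds whatever class $\rho$ ranges over (MLPs or arbitrary functions), since the argument only uses that the output factors through $\Psi$.

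The only genuine step is producing the separating symmetric function, and it reduces to the orbit being finite and closed. The statement is therefore essentially immediate, and we expect no real obstacle.
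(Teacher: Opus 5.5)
Your proof is correct and follows essentially the same route as the paper: a collision $\Psi(\mathbf{X})=\Psi(\mathbf{Y})$ between distinct orbits forces every $\rho\circ\Psi$ to agree at $\mathbf{X}$ and $\mathbf{Y}$, so no function in $\Csym(K^n)$ that separates them can be uniformly approximated. The only difference is that the paper gets the separating function abstractly, from the compact Hausdorff quotient $K^n/\mathfrak S_n$, while you write one down explicitly as the distance to the orbit of $\mathbf{Y}$, which also gives the concrete error threshold $\delta/2$.
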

\begin{proof}
Suppose \(\Psi(\mathbf X)=\Psi(\mathbf Y)\) for some \(\mathbf X,\mathbf Y\in K^n\) not related by a permutation. Let \(q:K^n\to K^n/\mathfrak S_n\) be the quotient map. Then \(q(\mathbf X)\neq q(\mathbf Y)\). Since \(K^n/\mathfrak S_n\) is compact Hausdorff, there exists \(\bar g\in C(K^n/\mathfrak S_n)\) such that \(\bar g(q(\mathbf X))\neq \bar g(q(\mathbf Y))\). Hence \(g=\bar g\circ q\) belongs to \(\Csym(K^n)\) and satisfies \(g(\mathbf X)\neq g(\mathbf Y)\).
However, every function of the form \(\rho\circ\Psi\) satisfies \(\rho(\Psi(\mathbf X))=\rho(\Psi(\mathbf Y))\). Therefore \(g\) cannot be uniformly approximated by such functions, contradicting universality. Hence \(\Psi\) must be injective modulo permutations.
\end{proof}
The following proposition records two simpler settings in which lower bounds on \(M\) follow from standard embedding obstructions.
\begin{proposition}[Elementary embedding obstructions]
\label{prop:elementary-embedding-obstruction}
The following elementary lower bounds hold.

\begin{enumerate}[label=(\roman*)]
    \item If the feature map \(\Phi\) is fixed and only the decoder \(\rho\) may vary, then scalar-valued universality in \(\Csym(K^n)\) forces
    \[
        M\ge nd.
    \]

    \item More generally, for \(\mathbb R^p\)-valued invariant targets, exact representation of every continuous permutation-invariant map \(K^n\to\mathbb R^p\) forces
    \[
        M\ge \min\{p,nd\}.
    \]
\end{enumerate}
\end{proposition}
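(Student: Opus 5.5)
For part (i) I will combine Proposition~\ref{prop:fixed-feature-injectivity} with an invariance-of-domain / dimension argument. By that proposition, universality with fixed \(\Phi\) forces \(\Psi:K^n\to\mathbb R^M\) to be injective modulo permutations. Here \(\Psi\) is continuous, since \(\Phi\) is an MLP with continuous activation. Restrict \(\Psi\) to a small closed cube \(Q\subset K^n\) on which no two points are permutations of one another. For instance, take \(Q=\prod_{i=1}^n B_i\), where the \(B_i\subset(0,1)^d\) are small pairwise disjoint closed cubes; then any permutation of a point of \(Q\) that stays in \(Q\) is the identity permutation. On \(Q\), injectivity modulo permutations therefore becomes genuine injectivity. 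Since \(Q\) is compact, \(\Psi|_Q\) is a topological embedding of a copy of \([0,1]^{nd}\) into \(\mathbb R^M\). If \(M<nd\), composing with the inclusion \(\mathbb R^M\hookrightarrow\mathbb R^{nd}\) gives an injective continuous map from an open subset of \(\mathbb R^{nd}\) whose image lies in a proper linear subspace. By Brouwer's invariance of domain, that image would have to be open in \(\mathbb R^{nd}\), which is a contradiction. Hence \(M\ge nd\).

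For part (ii) I will read "exact representation" as follows: every continuous invariant \(f:K^n\to\mathbb R^p\) equals \(\rho\circ\Psi\) for some continuous \(\Psi\) factoring through \(\mathbb R^M\), where \(\Psi\) may now depend on \(f\). The plan is to choose a single target \(f\) that is injective on \(Q\) and has image of topological dimension \(\min\{p,nd\}\). Set \(m=\min\{p,nd\}\). Take \(f=\iota\circ\pi\circ h\), where:
\begin{itemize}
  \item \(h:K^n\to\mathbb R^{nd}\) is continuous, permutation invariant, and injective on \(Q\); for example, \(h\) can be obtained by extending the coordinate map on \(Q\) to the symmetrized, saturated set \(\mathfrak S_n Q\) and then extending continuously via Tietze, after noting that the translates \(\sigma Q\) are disjoint;
  \item \(\pi\) is a coordinate projection onto \(m\) coordinates, chosen so that \(\pi\circ h\) stays injective on an \(m\)-dimensional subcube \(Q'\subset Q\);
  \item \(\iota:\mathbb R^m\hookrightarrow\mathbb R^p\) is the inclusion.
\end{itemize}
If \(f=\rho\circ\Psi\), then \(\Psi\) is injective on \(Q'\) because \(f\) is. So \(\Psi|_{Q'}\) embeds an \(m\)-cube into \(\mathbb R^M\), and the same invariance-of-domain argument gives \(M\ge m\).

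The main obstacle I expect is the construction of the invariant target \(f\) in part (ii) that remains injective on a full-dimensional cube, specifically the correct handling of the symmetrization. I would first try to define \(f\) on the disjoint union \(\bigsqcup_\sigma \sigma Q\) by \(f(\sigma\mathbf X)=\mathbf X\), restricted to suitable coordinates. This definition is consistent precisely because the translates are disjoint and the stabilizer of \(Q\) is trivial. I would then extend it to \(K^n\) and symmetrize by averaging over \(\mathfrak S_n\); averaging does not change the values on \(\mathfrak S_nQ\), since those values are already invariant there. The remaining step is routine: it only uses the fact that no continuous injection \([0,1]^m\to\mathbb R^{M}\) exists when \(M<m\).
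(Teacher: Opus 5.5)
Your proposal is correct and essentially follows the paper's argument. Part (i) combines Proposition~\ref{prop:fixed-feature-injectivity} with invariance of domain, and part (ii) uses the same labeled-copy construction, with a coordinate projection and an $\min\{p,nd\}$-dimensional coordinate face on which the single target is injective. In (i) you restrict to a product of disjoint cubes, where injectivity modulo permutations becomes genuine injectivity, instead of passing to $K^n/\mathfrak S_n$ and citing its topological dimension; this is only a minor variation, and it makes the invariance-of-domain step more explicit.
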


\begin{proof}
We prove (i). Let \(\Psi\) be the fixed latent map defined in \eqref{eq:fixed-feature-latent-map}. Since \(\Phi\) is fixed, every representable scalar-valued function has the form \(\rho\circ\Psi\). By Proposition~\ref{prop:fixed-feature-injectivity}, scalar-valued universality forces \(\Psi\) to be injective modulo permutations.
Because \(\Psi\) is permutation invariant, it factors through the quotient map \(q:K^n\to K^n/\mathfrak S_n\). Thus there exists a continuous map \(\overline{\Psi}:K^n/\mathfrak S_n\to\mathbb R^M\) such that \(\Psi=\overline{\Psi}\circ q\). Injectivity modulo permutations means exactly that \(\overline{\Psi}\) is injective.
The quotient \(K^n/\mathfrak S_n\) has topological dimension \(nd\). Hence, by invariance of domain, it cannot admit a continuous injective map into \(\mathbb R^M\) when \(M<nd\). Therefore \(M\ge nd\).

The proof of (ii) is deferred to Appendix.
\end{proof}
Proposition~\ref{prop:elementary-embedding-obstruction} explains why the fixed-feature and large-output settings are easier: in both cases, the problem reduces to an embedding obstruction. The scalar-output, target-dependent setting defining \(\muniv(d,n,k)\) is different. A scalar invariant target \(f:K^n\to\mathbb R\) need not embed the quotient \(K^n/\mathfrak S_n\), since the encoder \(\Phi\) may vary with \(f\). For example, when $k = n$ we have universality with $M = 1$ by taking $\Phi = f$. Thus, there is no single latent map whose injectivity can be tested. Instead, we must show that every admissible encoder below the claimed dimension threshold has a collision between two separated compact sets. This is exactly the role of the Borsuk-Ulam argument in Theorem~\ref{thm:indexed-janossy-main}.

\section{Reduction to indexed Janossy pooling}\label{subsec:indexed-reduction}

In this section, we show that universality for $k$-ary Janossy pooling implies universality for a non-invariant architecture that we call indexed Janossy pooling. Throughout this section, we let $K = [0,1]^d$. 
\begin{definition}[Indexed $k$-Janossy map] \label{def:indexed-janossy}
    An indexed $k$-Janossy function with latent dimension $M$ on $K^{n}$ is of the form
    \begin{equation}\label{eq:k-ary-janossy-indexed}
        f(\mathbf{X}) = \rho\left(
    \sum_{1\le i_1,\ldots,i_k\le n}
    \phi_{i_1,\ldots,i_k}(\mathbf{x}_{i_1},\ldots,\mathbf{x}_{i_k})
    \right),
    \end{equation}
    where each \(\phi_{i_1,\ldots,i_k}:K^k\to\mathbb R^M\) and \(\rho:\mathbb R^M\to\mathbb R\) are continuous. We define \(M_{\mathrm{ind}}(d,n,k)\) to be the smallest \(M\) for which functions of the form \eqref{eq:k-ary-janossy-indexed} are dense in \(C(K^n)\).
\end{definition}
We will also introduce a definition which captures the feature map in an indexed $k$-Janossy function.
\begin{definition}[Indexed \(k\)-Janossy latent maps]
A continuous map \(\Phi:K^n\to\mathbb R^M\) is called an indexed \(k\)-Janossy latent map if
\begin{equation}\label{eq:indexed-janossy}
    \Phi(\mathbf{x}_1,\ldots,\mathbf{x}_n)
    =
    \sum_{1\le i_1,\ldots,i_k\le n}
    \phi_{i_1,\ldots,i_k}(\mathbf{x}_{i_1},\ldots,\mathbf{x}_{i_k}),
\end{equation}
where each \(\phi_{i_1,\ldots,i_k}:K^k\to\mathbb R^M\) is continuous.
\end{definition}

The indexed $k$-Janossy maps are a larger class of functions than the original \(k\)-ary Janossy class \eqref{eq:k-ary-janossy}, because the encoder is allowed to depend on the ordered tuple \((i_1,\ldots,i_k)\). The usual \(k\)-ary Janossy architecture in \eqref{eq:k-ary-janossy} is the special case where
\[
    \phi_{i_1,\ldots,i_k}=\phi
    \qquad
    \text{for every }(i_1,\ldots,i_k).
\]

\subsection{Relationship with the Kolmogorov-Arnold superposition theorem} The Kolmogorov-Arnold superposition theorem \cite{kolmogorov1957representations,kahane1975theoreme} states that any continuous function $f:[0,1]^n$ can be expressed as
\begin{equation}
    f(\mathbf{x}) = \sum_{q=1}^{2n+1}
\Psi_q\!\left(
\sum_{p=1}^n \psi_{pq}(x_p)
\right),
\end{equation}
where $\Psi_q$ and $\psi_{pq}$ for $q=1,...,2n+1$ and $p = 1,...,n$ are continuous functions. We observe that this is equivalent to indexed $1$-Janossy pooling with the added restriction that the outer function $\rho$ is of the form
\begin{equation}
    \rho(\mathbf{y}) = \sum_{q=1}^{2n+1} \Psi_q(y_q),
\end{equation}
i.e., that $\rho$ is additively separable. With this additional restriction, it is known that a latent dimension of $2n+1$ is necessary \cite{sternfeld1985dimension,levin1990dimension}. We remark that the Kolmogorov-Arnold theorem has recently been successfully applied to deep learning in the form of Kolmogorov Arnold networks (KAN) \cite{liu2024kan}.

\subsection{Properties of indexed $k$-Janossy latent maps}\label{sec:k-janossy-properties}
In this subsection, we analyze properties of indexed $k$-Janossy latent maps which are critical for the proof of Theorem \ref{thm:indexed-janossy-main}. We have the following lemma.

\begin{lemma}[Finite-difference rigidity]\label{lem:janossy-finite-difference}
Let \(\Phi:K^n\to\mathbb R^M\) be an indexed \(k\)-Janossy latent map of the form \eqref{eq:indexed-janossy}. Let \(\mathbf a_i,\mathbf y_i\in\mathbb R^d\) for \(i=1,\ldots,k\), and suppose
\[
    \mathbf{a}_i+\eta_i \mathbf{y}_i\in K
    \qquad
    \text{for all }i=1,\ldots,k,\ \eta_i\in\{0,1\}.
\]
Then
\begin{equation}\label{eq:janossy-finite-difference}
    \sum_{\eta\in\{0,1\}^k}
    (-1)^{|\eta|}
    \Phi(\mathbf{a}_1+\eta_1\mathbf{y}_1,\ldots,\mathbf{a}_k+\eta_k\mathbf{y}_k,\mathbf{x}_{k+1},\ldots,\mathbf{x}_n)
\end{equation}
is independent of \(\mathbf{x}_{k+1},\ldots,\mathbf{x}_n\).
\end{lemma}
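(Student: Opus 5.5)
The plan is to expand $\Phi$ term by term using \eqref{eq:indexed-janossy} and check that, after taking the alternating sum over $\eta\in\{0,1\}^k$, every term that depends on $\mathbf x_{k+1},\ldots,\mathbf x_n$ cancels. Write
\[
\mathbf z^{\eta}=(\mathbf a_1+\eta_1\mathbf y_1,\ldots,\mathbf a_k+\eta_k\mathbf y_k,\mathbf x_{k+1},\ldots,\mathbf x_n).
\]
The alternating sum \eqref{eq:janossy-finite-difference} is then
\[
\sum_{(i_1,\ldots,i_k)}\ \sum_{\eta\in\{0,1\}^k}(-1)^{|\eta|}\,\phi_{i_1,\ldots,i_k}\bigl(\mathbf z^{\eta}_{i_1},\ldots,\mathbf z^{\eta}_{i_k}\bigr),
\]
by linearity, after exchanging the finite sums. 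I will show that each inner sum, taken for a fixed tuple $(i_1,\ldots,i_k)$, is either zero or independent of the trailing coordinates.

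\textbf{Case 1: the tuple covers $\{1,\ldots,k\}$.} Suppose $\{i_1,\ldots,i_k\}\supseteq\{1,\ldots,k\}$. Since the tuple has only $k$ entries, it must be a permutation of $(1,\ldots,k)$. In particular every index is at most $k$, so $\phi_{i_1,\ldots,i_k}(\mathbf z^\eta_{i_1},\ldots,\mathbf z^\eta_{i_k})$ involves only the coordinates $\mathbf a_j+\eta_j\mathbf y_j$. This term is therefore independent of $\mathbf x_{k+1},\ldots,\mathbf x_n$ for every $\eta$.

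\textbf{Case 2: some index $j\in\{1,\ldots,k\}$ is missed.} Otherwise there is a $j\in\{1,\ldots,k\}$ not appearing in the tuple. Then the summand $\phi_{i_1,\ldots,i_k}(\mathbf z^\eta_{i_1},\ldots,\mathbf z^\eta_{i_k})$ does not depend on $\eta_j$. Pair each $\eta$ having $\eta_j=0$ with the $\eta'$ obtained by flipping $\eta_j$ to $1$. Paired terms are equal but carry opposite signs $(-1)^{|\eta|}$ and $(-1)^{|\eta|+1}$, so they cancel. Hence the inner sum for such a tuple vanishes identically, whatever the values of $\mathbf x_{k+1},\ldots,\mathbf x_n$.

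Summing over tuples, only the Case 1 contributions survive, and these do not involve $\mathbf x_{k+1},\ldots,\mathbf x_n$; this proves the claim. The hypothesis $\mathbf a_i+\eta_i\mathbf y_i\in K$ is needed only so that every evaluation lies in the domain $K^n$. The only step that needs care is the counting argument in Case 1: a length-$k$ tuple that contains all of $1,\ldots,k$ must consist exactly of those indices, with no room for any index larger than $k$.
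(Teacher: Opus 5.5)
Your proof is correct, but it takes a different route from the paper's.

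The paper argues by induction on $k$. It forms the first difference $\Phi'(\mathbf x_2,\ldots,\mathbf x_n)=\Phi(\mathbf a_1+\mathbf y_1,\mathbf x_2,\ldots,\mathbf x_n)-\Phi(\mathbf a_1,\mathbf x_2,\ldots,\mathbf x_n)$. It notes that summands not involving $\mathbf x_1$ cancel and that the surviving ones have at most $k-1$ free slots. After padding with dummy arguments and absorbing constant summands, $\Phi'$ is therefore an indexed $(k-1)$-Janossy map on $K^{n-1}$, and the induction hypothesis applies.

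You instead expand the full $k$-fold alternating sum tuple by tuple and cancel in one step. A tuple missing some $j\le k$ is killed by pairing $\eta_j=0$ with $\eta_j=1$. A tuple hitting all of $1,\ldots,k$ must be a permutation of $(1,\ldots,k)$ by counting, so it involves no tail variable.

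What your route buys:
\begin{itemize}
\item It avoids the bookkeeping of re-expressing $\Phi'$ in indexed form, and the induction over a changing number of variables.
\item It gives more than independence: an explicit value for the alternating sum,
\[
\sum_{\sigma\in\mathfrak S_k}\sum_{\eta\in\{0,1\}^k}(-1)^{|\eta|}\,\phi_{\sigma(1),\ldots,\sigma(k)}\bigl(\mathbf a_{\sigma(1)}+\eta_{\sigma(1)}\mathbf y_{\sigma(1)},\ldots,\mathbf a_{\sigma(k)}+\eta_{\sigma(k)}\mathbf y_{\sigma(k)}\bigr).
\]
\end{itemize}
What the paper's induction buys: it isolates the structural fact that a finite difference in one variable lowers the interaction order of an indexed Janossy map by one. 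This explains conceptually why exactly $k$ differences are needed to remove all dependence on the tail.
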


\begin{figure}[t]
\centering
\begin{tikzpicture}[scale=0.70, every node/.style={font=\small}]
\foreach \x in {0,1,2,3}
  \foreach \y in {0,1,2,3}
    \fill (\x,\y) circle (2pt);

\foreach \x in {0,1,2,3}
  \fill[red] (\x,0) circle (3pt);
\foreach \y in {0,1,2,3}
  \fill[red] (0,\y) circle (3pt);

\draw[blue, thick] (0,0) -- (2,0) -- (2,2) -- (0,2) -- cycle;
\fill[blue] (2,2) circle (3pt);

\node[blue] at (2.85,2.2) {top vertex};

\draw[->] (-0.3,0) -- (3.5,0);
\draw[->] (0,-0.3) -- (0,3.5);
\end{tikzpicture}
\caption{For \(k=2\), the black points represent the base grid \(G^2\), while the red points represent the axis subset \(A\). A non-axis grid point \((\mathbf y_1,\mathbf y_2)\) is the top vertex of the rectangle whose other vertices \((\mathbf 0,\mathbf 0)\), \((\mathbf y_1,\mathbf 0)\), and \((\mathbf 0,\mathbf y_2)\) all lie in \(A\).}
\label{fig:axes-grid-k2}
\end{figure}

\begin{proof}
The proof is deferred to Appendix \ref{app:lemma1proof}.
\end{proof}

Next, let \(G\subset K=[0,1]^d\) be a finite set of \(d\)-dimensional vectors such that
\[
\mathbf 0\in G,
\qquad
|G|=\left\lceil(d(n-k)+1)^{1/k}\right\rceil .
\]
Such a set can certainly always be found since $K$ is an infinite set. 
Consider \(G^k\subset K^k\), whose elements are \(k\)-tuples
\((\mathbf y_1,\ldots,\mathbf y_k)\) with each \(\mathbf y_j\in G\), and define its `axis' subset $A$ (which depends upon $G$) via
\[
A=\{(\mathbf{y}_1,\ldots,\mathbf{y}_k)\in G^k:\mathbf{y}_j=\mathbf{0}\text{ for at least one }j\}.
\]
Note that
\begin{equation}\label{eq:axes-cardinality}
|A|
=
\left\lceil(d(n-k)+1)^{1/k}\right\rceil^k
-
\left(\left\lceil(d(n-k)+1)^{1/k}\right\rceil-1\right)^k .
\end{equation}
The case \(k=2\) is illustrated in Figure~\ref{fig:axes-grid-k2}.

Lemma \ref{lem:janossy-finite-difference} implies the following crucial property of indexed $k$-Janossy maps.

\begin{proposition}[Equality from axes to the full grid]\label{prop:axes-to-grid}
Let \(\Phi:K^n\to\mathbb R^M\) be an indexed \(k\)-Janossy latent map of the form \eqref{eq:indexed-janossy}. Suppose that for two tail configurations \(\mathbf{u}_e,\mathbf{v}_e\in K^{n-k}\),
\[
    \Phi(\mathbf{x}_b,\mathbf{u}_e)=\Phi(\mathbf{x}_b,\mathbf{v}_e)
    \qquad
    \text{for every }\mathbf{x}_b\in A.
\]
Then
\[
    \Phi(\mathbf{x}_b,\mathbf{u}_e)=\Phi(\mathbf{x}_b,\mathbf{v}_e)
    \qquad
    \text{for every }\mathbf{x}_b\in G^k.
\]
\end{proposition}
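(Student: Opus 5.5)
The plan is to apply Lemma~\ref{lem:janossy-finite-difference} with base point $\mathbf a_i=\mathbf 0$ and increments $\mathbf y_i\in G$. Define $D(\mathbf x_b):=\Phi(\mathbf x_b,\mathbf u_e)-\Phi(\mathbf x_b,\mathbf v_e)$ for $\mathbf x_b\in G^k$. By hypothesis $D$ vanishes on $A$, and we want to show $D$ vanishes on all of $G^k$.

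Fix a non-axis point $(\mathbf y_1,\ldots,\mathbf y_k)\in G^k\setminus A$, so every $\mathbf y_j\neq\mathbf 0$. Take $\mathbf a_i=\mathbf 0$ for all $i$. Then $\mathbf a_i+\eta_i\mathbf y_i\in\{\mathbf 0,\mathbf y_i\}\subset G\subset K$, so the hypothesis of Lemma~\ref{lem:janossy-finite-difference} holds. The lemma says the alternating sum
\[
\sum_{\eta\in\{0,1\}^k}(-1)^{|\eta|}\Phi(\eta_1\mathbf y_1,\ldots,\eta_k\mathbf y_k,\mathbf x_{k+1},\ldots,\mathbf x_n)
\]
does not depend on the tail. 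We evaluate it with tail $\mathbf u_e$ and with tail $\mathbf v_e$ and subtract, obtaining $\sum_{\eta}(-1)^{|\eta|}D(\eta_1\mathbf y_1,\ldots,\eta_k\mathbf y_k)=0$.

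Every vertex $(\eta_1\mathbf y_1,\ldots,\eta_k\mathbf y_k)$ with $\eta\neq(1,\ldots,1)$ has some coordinate equal to $\mathbf 0$, so it lies in $A$. There $D=0$ by hypothesis. Only the top vertex $\eta=(1,\ldots,1)$ survives, giving $(-1)^kD(\mathbf y_1,\ldots,\mathbf y_k)=0$. Hence $D=0$ on $G^k\setminus A$, and combined with the hypothesis, on all of $G^k$.

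This argument needs no induction, because all lower vertices of the box lie on the axes. The only point needing care is checking that the lemma's box condition holds, which it does since $\mathbf 0\in G$ and $G\subset K$. I therefore do not expect a real obstacle.
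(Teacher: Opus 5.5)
Your proposal is correct and follows the paper's proof essentially verbatim. You apply Lemma~\ref{lem:janossy-finite-difference} at base point $\mathbf 0$ with increments $\mathbf y_1,\ldots,\mathbf y_k$, compare the alternating sums at the tails $\mathbf u_e$ and $\mathbf v_e$, and note that every non-top vertex lies in $A$, so only the top-vertex term survives (your explicit check that $\mathbf 0\in G\subset K$ gives the box condition is a detail the paper leaves implicit).
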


\begin{proof}[Proof of Proposition~\ref{prop:axes-to-grid}]
The conclusion is already assumed on \(A\). Fix \(\mathbf{x}_b=(\mathbf{y}_1,\ldots,\mathbf{y}_k)\in G^k\setminus A\). Then every \(\mathbf{y}_i\neq0\). Consider the cube vertices
\[
(\eta_1\mathbf{y}_1,\ldots,\eta_k\mathbf{y}_k),
\qquad
\eta\in\{0,1\}^k.
\]
The top vertex \(\eta=(1,\ldots,1)\) is \(\mathbf{x}_b\), and every other vertex lies in \(A\).

Apply Lemma~\ref{lem:janossy-finite-difference} with base point \((0,\ldots,0)\) and increments \(\mathbf{y}_1,\ldots,\mathbf{y}_k\). The alternating sum
\[
    \sum_{\eta\in\{0,1\}^k}
    (-1)^{|\eta|}
    \Phi(\eta_1\mathbf{y}_1,\ldots,\eta_k\mathbf{y}_k,\mathbf{z}_e)
\]
is independent of \(\mathbf{z}_e\). Hence its value at \(\mathbf{z}_e=\mathbf{u}_e\) equals its value at \(\mathbf{z}_e=\mathbf{v}_e\). All terms except the top-vertex term agree by the assumption on \(A\), so they cancel, leaving
\[
    \Phi(\mathbf{y}_1,\ldots,\mathbf{y}_k,\mathbf{u}_e)=\Phi(\mathbf{y}_1,\ldots,\mathbf{y}_k,\mathbf{v}_e).
\]
Thus equality holds for every \(\mathbf{x}_b\in G^k\). See again Figure \ref{fig:axes-grid-k2}.
\end{proof}

\subsection{Reduction lemma}

Next, we show that lower bounds on $M_{\mathrm{ind}}(d,n,k)$ imply lower bounds on $\muniv(d,n,k)$. This reduces our problem to proving lower bounds on the universality of indexed $k$-Janossy pooling. 

To see this first in one dimension, choose \(n\) distinct points \(z_1,\ldots,z_n\in(0,1)\), and choose \(\delta>0\) so small that the intervals \(z_i+\delta[0,1]\) are contained in \((0,1)\) and are pairwise disjoint. Then
\[
    \{(z_1+\delta t_1,\ldots,z_n+\delta t_n):(t_1,\ldots,t_n)\in[0,1]^n\}
\]
is a copy of \([0,1]^n\) inside the original domain where the \(i\)-th coordinate is identified by the interval it lies in. Since the intervals are disjoint, different permutations of this copy are disjoint. Hence, given any \(g\in C([0,1]^n)\), one can define a permutation-invariant continuous function \(f\) such that
\[
    f(z_1+\delta t_1,\ldots,z_n+\delta t_n)
    =
    g(t_1,\ldots,t_n)
    \qquad
    \text{for every }(t_1,\ldots,t_n)\in[0,1]^n.
\]

In general dimension, we use the same construction with disjoint cubes. Choose pairwise disjoint closed cubes
\[
    C_1,\ldots,C_n\subset(0,1)^d
\]
and affine homeomorphisms \(T_j:K\to C_j\). The set
\[
    L=\{(T_1\mathbf{t}_1,\ldots,T_n\mathbf{t}_n):(\mathbf{t}_1,\ldots,\mathbf{t}_n)\in K^n\}
    \subset K^n
\]
is a copy of \(K^n\) in which the cube \(C_j\) identifies the \(j\)-th input. Therefore, on \(L\), a permutation-invariant function can still agree with an arbitrary ordered function \(g(\mathbf{t}_1,\ldots,\mathbf{t}_n)\), because the order is encoded by the cubes.

The next lemma states the consequence for \(k\)-ary Janossy pooling. If the k-ary Janossy architecture were universal on \(\Csym(K^n)\), then its restriction to \(L\) would approximate every \(g\in C(K^n)\). But after this restriction, the $k$-ary Janossy encoder in \ref{eq:k-ary-janossy} behaves like an indexed encoder: the term indexed by \((i_1,\ldots,i_k)\) always receives inputs from the fixed cubes \(C_{i_1},\ldots,C_{i_k}\).

\begin{lemma}[Reduction to indexed Janossy]\label{lem:labeled-copy-reduction}
If the \(k\)-ary Janossy class with latent dimension \(M\) is dense in \(\Csym(K^n)\), then every \(g\in C(K^n)\) can be uniformly approximated on \(K^n\) by functions of the form
\[
    (\mathbf{t}_1,\ldots,\mathbf{t}_n)
    \longmapsto
    \rho\left(
    \sum_{1\le i_1,\ldots,i_k\le n}
    \phi_{i_1,\ldots,i_k}(\mathbf{t}_{i_1},\ldots,\mathbf{t}_{i_k})
    \right),
\]
where each \(\phi_{i_1,\ldots,i_k}:K^k\to\mathbb R^M\) and \(\rho:\mathbb R^M\to\mathbb R\) are continuous. This implies that
\begin{equation}
    \muniv(d,n,k) \geq M_{\mathrm{ind}}(d,n,k).
\end{equation}
\end{lemma}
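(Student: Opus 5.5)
The plan is to use the labeled copy $L\subset K^n$ built just above from disjoint cubes $C_1,\ldots,C_n\subset(0,1)^d$ and affine homeomorphisms $T_j:K\to C_j$. Fix $g\in C(K^n)$ and $\varepsilon>0$. The proof has three steps: extend $g$ to a permutation-invariant $f$, approximate $f$ by a $k$-ary Janossy function, and pull that approximant back along $T=(T_1,\ldots,T_n)$ so that it becomes an indexed function.

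\textbf{Step 1: construct $f\in\Csym(K^n)$ with $f(T_1\mathbf t_1,\ldots,T_n\mathbf t_n)=g(\mathbf t_1,\ldots,\mathbf t_n)$.}
\begin{itemize}
\item For $\sigma\in\mathfrak S_n$, set $L_\sigma=\sigma(L)$, the set of tuples whose $i$-th entry lies in $C_{\sigma(i)}$. The cubes are pairwise disjoint, so the sets $L_\sigma$ are pairwise disjoint and compact.
\item Define $f$ on $\bigcup_\sigma L_\sigma$ by $f(\mathbf X)=g(T^{-1}(\sigma^{-1}\mathbf X))$ for $\mathbf X\in L_\sigma$. Each $\mathbf X$ lies in exactly one $L_\sigma$, and $\sigma$ is determined by which cube each entry lies in, so $f$ is well defined.
\item This $f$ is continuous, because it is continuous on each of finitely many disjoint closed pieces. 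It is also permutation invariant on the invariant closed set $\bigcup_\sigma L_\sigma$.
\item Extend $f$ to all of $K^n$ by Tietze, obtaining some $\tilde f$. Then symmetrize, $f'=\frac1{n!}\sum_\pi \tilde f\circ\pi$. Since $f$ is already invariant on $\bigcup_\sigma L_\sigma$, averaging does not change it there. Hence $f'\in\Csym(K^n)$ and $f'$ agrees with $g\circ T^{-1}$ on $L$.
\end{itemize}

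\textbf{Step 2: approximate $f'$ and restrict to $L$.}
\begin{itemize}
\item By the density hypothesis, choose $\rho$ and $\Phi$ with $\bigl|f'(\mathbf X)-\rho\bigl(\sum_{i_1,\ldots,i_k}\Phi(\mathbf x_{i_1},\ldots,\mathbf x_{i_k})\bigr)\bigr|<\varepsilon$ on $K^n$.
\item Substitute $\mathbf x_i=T_i\mathbf t_i$. This gives $\bigl|g(\mathbf t)-\rho\bigl(\sum\phi_{i_1,\ldots,i_k}(\mathbf t_{i_1},\ldots,\mathbf t_{i_k})\bigr)\bigr|<\varepsilon$ for all $\mathbf t\in K^n$, where $\phi_{i_1,\ldots,i_k}(\mathbf s_1,\ldots,\mathbf s_k)=\Phi(T_{i_1}\mathbf s_1,\ldots,T_{i_k}\mathbf s_k)$.
\item Each $\phi_{i_1,\ldots,i_k}$ is continuous from $K^k$ to $\mathbb R^M$, as a composition of continuous maps. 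This holds including for repeated indices, where the same $\mathbf t$ is fed into several slots, so no special treatment is needed there. MLPs are continuous, so the continuous classes in Definition~\ref{def:indexed-janossy} contain them.
\end{itemize}

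\textbf{Step 3: conclude the dimension inequality.}
Steps 1 and 2 show that density of the $k$-ary class with latent dimension $M$ implies density of the indexed class with the same $M$. Taking $M=\muniv(d,n,k)$ gives $M_{\mathrm{ind}}(d,n,k)\le\muniv(d,n,k)$. The definition of $\muniv$ refers to MLP parameterizations, but continuous $\Phi$ and $\rho$ are all that the argument uses, so it goes through unchanged.

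The only mildly delicate step is Step 1. Continuity and well-definedness of the invariant extension rely on the disjointness of the sets $L_\sigma$. The Tietze-plus-averaging step handles the rest of the domain.
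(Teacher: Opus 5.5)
Your proposal is correct and follows essentially the same route as the paper. You define the target on the pairwise disjoint permuted copies of the labeled product $L$, extend by Tietze and symmetrize to get an element of $\Csym(K^n)$ agreeing with $g\circ T^{-1}$ on $L$, then approximate it by a shared-encoder Janossy function and pull back along the $T_i$ to obtain the indexed encoders, which yields $\muniv(d,n,k)\ge M_{\mathrm{ind}}(d,n,k)$.
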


\begin{proof}
Fix \(g\in C(K^n)\), and let
\[
    L
    =
    \{(T_1\mathbf{t}_1,\ldots,T_n\mathbf{t}_n):
    (\mathbf{t}_1,\ldots,\mathbf{t}_n)\in K^n\}
    \subset K^n .
\]
On \(L\), define
\[
    f_0(T_1\mathbf{t}_1,\ldots,T_n\mathbf{t}_n)
    =
    g(\mathbf{t}_1,\ldots,\mathbf{t}_n).
\]
Extend this definition to all permutations of \(L\) by requiring permutation invariance. This is well-defined because the cubes \(C_1,\ldots,C_n\) are disjoint: every configuration in the union of the permuted copies has a unique point in each cube \(C_j\), so it uniquely determines the ordered tuple \((\mathbf{t}_1,\ldots,\mathbf{t}_n)\). Since the permuted copies are disjoint compact sets, \(f_0\) is continuous on their union.

By Tietze extension, extend \(f_0\) to some \(h\in C(K^n)\), and symmetrize:
\[
    f(\mathbf{x}_1,\ldots,\mathbf{x}_n)
    =
    \frac1{n!}
    \sum_{\sigma\in\mathfrak S_n}
    h(\mathbf{x}_{\sigma(1)},\ldots,\mathbf{x}_{\sigma(n)}).
\]
Then \(f\in\Csym(K^n)\), and on the original labeled copy \(L\),
\[
    f(T_1\mathbf{t}_1,\ldots,T_n\mathbf{t}_n)
    =
    g(\mathbf{t}_1,\ldots,\mathbf{t}_n),
\]
because on each permuted copy the definition of \(f_0\) was chosen to be permutation invariant and therefore agrees with \(g(\mathbf t_1,\ldots,\mathbf t_n)\).

Assume the \(k\)-ary Janossy architecture is dense in \(\Csym(K^n)\). Then \(f\) can be uniformly approximated by functions \(\rho\circ\Phi\), where
\[
    \Phi(\mathbf{x}_1,\ldots,\mathbf{x}_n)
    =
    \sum_{1\le i_1,\ldots,i_k\le n}
    \phi(\mathbf{x}_{i_1},\ldots,\mathbf{x}_{i_k}).
\]
Restricting to \(L\), so that \(\mathbf{x}_i=T_i\mathbf{t}_i\), gives
\[
    \sum_{1\le i_1,\ldots,i_k\le n}
    \phi(T_{i_1}\mathbf{t}_{i_1},\ldots,T_{i_k}\mathbf{t}_{i_k}).
\]
For each ordered tuple \(I=(i_1,\ldots,i_k)\), define
\[
    \phi_I(\mathbf{z}_1,\ldots,\mathbf{z}_k)
    =
    \phi(T_{i_1}\mathbf{z}_1,\ldots,T_{i_k}\mathbf{z}_k).
\]
Thus, on \(L\), the shared latent map becomes the indexed latent map
\[
    \sum_{1\le i_1,\ldots,i_k\le n}
    \phi_{i_1,\ldots,i_k}(\mathbf{t}_{i_1},\ldots,\mathbf{t}_{i_k}).
\]
Since \(f=g\) on \(L\), this indexed architecture approximates \(g\). Hence every \(g\in C(K^n)\) can be approximated by the indexed class.
\end{proof}
Lemma~\ref{lem:labeled-copy-reduction} shows that any universality result for the original \(k\)-ary Janossy architecture would imply universality for the larger indexed class. Therefore, it suffices to prove a lower bound for the larger indexed class.

\section{Main Results}
In this section, we prove the main latent-dimension lower bounds. Using the reduction from the previous section, it suffices to prove a lower bound for the larger indexed Janossy class introduced in Definition~\ref{def:indexed-janossy}. We state this indexed lower bound in Theorem~\ref{thm:indexed-janossy-main} and prove it in Subsection~\ref{subsec:indexed-lower-bound}. The corresponding lower bounds for \(k\)-ary Janossy pooling and Deep Sets are then obtained in Corollaries~\ref{cor:shared-janossy-main} and~\ref{cor:deepsets} respectively.

\subsection{Statement of the lower bounds}

We now state the main lower bound results for the indexed latent dimension \(M_{\mathrm{ind}}(d,n,k)\) and for the $k$-ary Janossy pooling latent dimension \(M_{\mathrm{univ}}(d,n,k)\).
\begin{theorem}[Indexed lower bound]\label{thm:indexed-janossy-main}
Let \(d\ge1\) and \(1\le k<n\). Then
\begin{equation}\label{eq:indexed-main-bound}
M_{\mathrm{ind}}(d,n,k)
\ge
\left\lceil
\frac{d(n-k)}
{
\left\lceil(d(n-k)+1)^{1/k}\right\rceil^k
-
\left(\left\lceil(d(n-k)+1)^{1/k}\right\rceil-1\right)^k
}
\right\rceil .
\end{equation}
\end{theorem}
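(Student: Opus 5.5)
Let \(N=d(n-k)\) and \(a=|A|\), given by \eqref{eq:axes-cardinality}. We argue by contradiction: suppose indexed \(k\)-Janossy functions with latent dimension \(M\) are dense in \(C(K^n)\) and \(Ma<N\), i.e.\ \(Ma\le N-1\).

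\textbf{Step 1: a Borsuk--Ulam collision.} Fix any indexed \(k\)-Janossy latent map \(\Phi\) of latent dimension \(M\). Consider the map
\[
F:K^{n-k}\to(\mathbb R^M)^A=\mathbb R^{Ma},\qquad F(\mathbf z_e)=\bigl(\Phi(\mathbf x_b,\mathbf z_e)\bigr)_{\mathbf x_b\in A}.
\]
The tail space \(K^{n-k}=[0,1]^{N}\) contains an \(N\)-dimensional ball, and hence a sphere \(S^{N-1}\) centred at the cube's centre \(\mathbf c\). The antipodal map \(\mathbf z\mapsto 2\mathbf c-\mathbf z\) preserves this sphere. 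Since \(Ma\le N-1\), Borsuk--Ulam applies to \(F|_{S^{N-1}}\), possibly after padding the codomain with zero coordinates up to \(\mathbb R^{N-1}\). It gives antipodal points \(\mathbf u_e,\mathbf v_e=2\mathbf c-\mathbf u_e\) with \(F(\mathbf u_e)=F(\mathbf v_e)\).

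Proposition~\ref{prop:axes-to-grid} then upgrades this to
\[
\Phi(\mathbf x_b,\mathbf u_e)=\Phi(\mathbf x_b,\mathbf v_e)\qquad\text{for all }\mathbf x_b\in G^k .
\]
Consequently every \(\rho\circ\Phi\) satisfies \(\rho\circ\Phi(\mathbf x_b,\mathbf u_e)=\rho\circ\Phi(\mathbf x_b,\mathbf v_e)\) for all \(\mathbf x_b\in G^k\). Note that the pair \((\mathbf u_e,\mathbf v_e)\) depends on \(\Phi\).

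\textbf{Step 2: a target that no collision can fit.} Since the collision pair varies with \(\Phi\), the target must be chosen so that \emph{every} antipodal pair is penalized. Take a single point \(\mathbf x_b^0=(\mathbf 0,\ldots,\mathbf 0)\in G^k\) and set
\[
g(\mathbf x_b,\mathbf z_e)=\langle \mathbf z_e-\mathbf c,\mathbf w\rangle ,
\]
where \(\mathbf w\) is chosen so that this quantity is odd and bounded away from zero on the sphere. A linear functional vanishes somewhere on the sphere, so this fails for \(N\ge2\). A better choice is to let \(g\) restricted to \(\{\mathbf x_b^0\}\times S^{N-1}\) be \(\mathbf z\mapsto r^{-1}(\mathbf z-\mathbf c)\in\mathbb R^N\). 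That is vector-valued, however, and a scalar \(g\) cannot be everywhere-separating: any odd scalar function on the sphere has a zero.

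The fix is to use the full grid \(G^k\) to host a vector-valued odd function. Since \(|G^k|\ge N+1\), choose \(N\) distinct points \(\mathbf p_1,\ldots,\mathbf p_N\in G^k\) and define \(g\) by Tietze extension so that
\[
g(\mathbf p_j,\mathbf z_e)=(\mathbf z_e-\mathbf c)_j/r \quad\text{on the sphere.}
\]
At any antipodal pair \(\mathbf u_e,\mathbf v_e\), some coordinate \(j\) has \(|(\mathbf u_e-\mathbf c)_j|/r\ge N^{-1/2}\). Hence
\[
|g(\mathbf p_j,\mathbf u_e)-g(\mathbf p_j,\mathbf v_e)|\ge 2N^{-1/2}.
\]
On the other hand, for any \(\rho\circ\Phi\) the corresponding difference is \(0\), by the equality on \(G^k\) from Step~1. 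Therefore \(\|g-\rho\circ\Phi\|_\infty\ge N^{-1/2}\) for every indexed function, contradicting density. This is exactly why the grid size \(\lceil(N+1)^{1/k}\rceil\) was chosen: to ensure \(|G^k|\ge N+1\) grid points are available.

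\textbf{Conclusion and main obstacle.} The contradiction forces \(Ma\ge N\), i.e.\ \(M\ge\lceil N/a\rceil\), which is \eqref{eq:indexed-main-bound}. The main obstacle is Step~2: we must design a single target \(g\) whose failure is witnessed at every antipodal pair, even though the collision pair depends on the approximant. Encoding the odd vector-valued map \(\mathbf z\mapsto\mathbf z-\mathbf c\) across distinct grid points handles this. The remaining details are routine: Borsuk--Ulam applies because \(Ma\le N-1\) (padding if \(Ma<N-1\)), and \(g\) is well defined and continuous because the sets \(\{\mathbf p_j\}\times K^{n-k}\) are disjoint and closed.
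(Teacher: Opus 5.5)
Your proof is correct. Step~1 is exactly the paper's argument: Borsuk--Ulam for the $A$-indexed map on the tail sphere, then Proposition~\ref{prop:axes-to-grid} to pass from $A$ to $G^k$.

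Where you depart from the paper is the choice of target. The paper covers $S^{N-1}$ by $N+1$ closed sets $B_r$ with $B_r\cap(-B_r)=\varnothing$ and attaches them injectively to labels $\boldsymbol{\chi}(r)\in G^k$. This gives two disjoint compact sets $E^\pm$, and a Urysohn function equal to $1$ on $E^+$ and $0$ on $E^-$ has error at least $1/2$ against every $\rho\circ\Phi$. The reason is that the collision at $\mathbf u\in B_r$ pairs a point of $E^+$ with a point of $E^-$ under the same label.

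You instead spread the odd, nowhere-vanishing map $\mathbf z\mapsto(\mathbf z-\mathbf c)/r$ over $N$ grid points, one coordinate per point. Every unit vector has a coordinate of modulus at least $N^{-1/2}$, so every antipodal pair is penalized, giving error at least $N^{-1/2}$. In both versions the target is fixed before $\Phi$, and you correctly identified this as the crux.

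The two routes buy different things:
\begin{itemize}
\item The paper's version gives a dimension-free gap of $1/2$ and fits the two-separated-compact-sets framework of Wagstaff et al.
\item Yours avoids the antipodal-free covering of Appendix~\ref{appendix:regions} altogether. It also uses only $N$ labels rather than $N+1$, because each label carries both signs of an odd function.
\end{itemize}

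Your closing remark is therefore inaccurate for your own construction: it needs only $|G^k|\ge N$, not $N+1$. So it runs verbatim with $|G|=\lceil N^{1/k}\rceil$, which can slightly sharpen the bound for $k\ge2$. For example, with $N=4$ and $k=2$ it gives $|A|=3$ and $M_{\mathrm{ind}}\ge 2$, whereas \eqref{eq:indexed-main-bound} gives $|A|=5$ and only the trivial bound $1$. The $N^{1/k}$ rate is unchanged.

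You should also drop the exploratory false starts at the beginning of Step~2 in a final write-up.
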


The $k$-ary Janossy pooling lower bound follows immediately from the indexed one.

\begin{corollary}[k-ary Janossy lower bound]\label{cor:shared-janossy-main}
Let \(d\ge1\) and \(1\le k<n\). Then
\[
\muniv(d,n,k)
\ge
\left\lceil
\frac{d(n-k)}
{
\left\lceil(d(n-k)+1)^{1/k}\right\rceil^k
-
\left(\left\lceil(d(n-k)+1)^{1/k}\right\rceil-1\right)^k
}
\right\rceil .
\]
\end{corollary}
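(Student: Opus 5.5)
The corollary follows by chaining two results already available. Lemma~\ref{lem:labeled-copy-reduction} gives
\[
\muniv(d,n,k)\ge M_{\mathrm{ind}}(d,n,k),
\]
and Theorem~\ref{thm:indexed-janossy-main} bounds $M_{\mathrm{ind}}(d,n,k)$ from below by the displayed ceiling expression. Composing the two inequalities gives the stated bound for $\muniv(d,n,k)$ under the same hypotheses $d\ge1$ and $1\le k<n$.

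The one point to check is that Lemma~\ref{lem:labeled-copy-reduction} really yields the inequality between minimal dimensions, and not only the density statement for a fixed $M$. Let $M=\muniv(d,n,k)$. Then the $k$-ary Janossy class with latent dimension $M$ is dense in $\Csym(K^n)$.

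The lemma gives density of the indexed class in $C(K^n)$ with maps $\phi_I:K^k\to\mathbb R^M$ and $\rho$ continuous. The one thing to reconcile is the regularity: the paper's Janossy definition uses MLPs, whereas Definition~\ref{def:indexed-janossy} uses arbitrary continuous maps. MLPs are continuous, so this only enlarges the class and does not affect the argument. Hence $M_{\mathrm{ind}}(d,n,k)\le M$.

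No step here is a real obstacle, since all the substance lies in the theorem. Its proof presumably uses the Borsuk--Ulam argument together with Proposition~\ref{prop:axes-to-grid}: matching the latent values on the axis set $A$, with $|A|$ as in \eqref{eq:axes-cardinality}, forces agreement on all of $G^k$. Roughly, this is a system of $M|A|$ equations in $d(n-k)$ tail unknowns, which must be solvable by Borsuk--Ulam when $M|A|<d(n-k)$. We take the theorem as given, so the corollary is immediate.
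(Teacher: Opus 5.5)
Your proposal is correct and matches the paper's proof. The paper likewise applies Lemma~\ref{lem:labeled-copy-reduction} to get $\muniv(d,n,k)\ge M_{\mathrm{ind}}(d,n,k)$ and then invokes Theorem~\ref{thm:indexed-janossy-main}. Your added check on the MLP-versus-continuous regularity is a harmless clarification that the paper leaves implicit.
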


\begin{proof}
If the \(k\)-ary Janossy architecture were universal with latent dimension \(M\), then Lemma~\ref{lem:labeled-copy-reduction} would make the indexed class with the same latent dimension \(M\) dense in \(C(K^n)\). By definition of \(M_{\mathrm{ind}}(d,n,k)\), this forces
\[
    M\ge M_{\mathrm{ind}}(d,n,k).
\]
Thus \(\muniv(d,n,k)\ge M_{\mathrm{ind}}(d,n,k)\), and the result follows from Theorem~\ref{thm:indexed-janossy-main}.
\end{proof}

\begin{corollary}[Fixed \(k\)]\label{cor:fixed-k-janossy}
For every fixed \(k\ge1\),
\[
    \muniv(d,n,k)\gtrsim_k (d(n-k))^{1/k}.
\]
\end{corollary}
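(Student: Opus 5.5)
This follows from Corollary~\ref{cor:shared-janossy-main} by estimating the right-hand side for fixed $k$. We treat $k$ as a constant and all implicit constants may depend on $k$. We write $N=d(n-k)$ and $m=\lceil (N+1)^{1/k}\rceil\ge 2$. The denominator in Corollary~\ref{cor:shared-janossy-main} is $|A|=m^k-(m-1)^k$.

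First, we bound $|A|$ from above. By the mean value theorem, or by expanding and counting $k$-tuples with at least one zero coordinate, we get $m^k-(m-1)^k\le k\,m^{k-1}$. Since $m\le (N+1)^{1/k}+1\le 2(N+1)^{1/k}$, this gives
\[
|A|\le k\,2^{k-1}(N+1)^{(k-1)/k}.
\]
Hence the lower bound in Corollary~\ref{cor:shared-janossy-main} satisfies
\[
\muniv(d,n,k)\ \ge\ \frac{N}{k2^{k-1}(N+1)^{(k-1)/k}}.
\]

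Second, we convert this to $N^{1/k}$. When $N\ge1$ we have $N+1\le 2N$, so the right-hand side is at least $c_k N^{1/k}$ with $c_k=\bigl(k2^{k-1}2^{(k-1)/k}\bigr)^{-1}$.

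The range of $k$ needs a small check. The corollary is stated for every fixed $k\ge1$, while Corollary~\ref{cor:shared-janossy-main} requires $k<n$. When $k=n$ we have $N=0$, so the claimed bound is trivial because $\muniv\ge1$. The same trivial bound $\muniv\ge1$ also covers any degenerate small cases, using the ceiling.

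The only real step is the elementary estimate $|A|\lesssim_k N^{(k-1)/k}$. Everything else is bookkeeping of constants, so I expect no significant obstacle.
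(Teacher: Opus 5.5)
Your proposal is correct and is essentially the paper's argument: bound the denominator $|A|=s^k-(s-1)^k\le k s^{k-1}\lesssim_k (d(n-k))^{(k-1)/k}$ and substitute into Corollary~\ref{cor:shared-janossy-main}. Your explicit constant $c_k$ and your separate treatment of the case $k=n$ (where $d(n-k)=0$, so the claimed bound is vacuous) only make the paper's terse bookkeeping more careful.
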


\begin{proof}
Let \(s=\left\lceil(d(n-k)+1)^{1/k}\right\rceil\). Since
\[
s^k-(s-1)^k\le k s^{k-1}\le C_k(d(n-k))^{(k-1)/k},
\]
Corollary~\ref{cor:shared-janossy-main} gives
\[
\muniv(d,n,k)\ge c_k(d(n-k))^{1/k}.
\]
\end{proof}

\begin{corollary}[Deep Sets]\label{cor:deepsets}
For Deep Sets, corresponding to \(k=1\),
\[
    \muniv(d,n,1)\ge d(n-1).
\]
\end{corollary}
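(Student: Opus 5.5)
This is the specialization of Corollary~\ref{cor:shared-janossy-main} to \(k=1\), so the plan is simply to evaluate the bound there at \(k=1\) and simplify. No new topological input is needed; all the work (the Borsuk--Ulam argument behind Theorem~\ref{thm:indexed-janossy-main} and the reduction of Lemma~\ref{lem:labeled-copy-reduction}) is already packaged in that corollary. The corollary requires \(1\le k<n\), which for \(k=1\) is exactly the standing assumption \(n\ge2\).

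With \(k=1\), the quantity \(s=\left\lceil(d(n-k)+1)^{1/k}\right\rceil\) becomes \(\lceil d(n-1)+1\rceil = d(n-1)+1\), since this is already an integer. The denominator in Corollary~\ref{cor:shared-janossy-main} is then
\[
s^k-(s-1)^k = s-(s-1) = 1.
\]
Combinatorially, \(G\) has \(d(n-1)+1\) points, and the axis set \(A\subset G^1=G\) is just \(\{\mathbf 0\}\), so \(|A|=1\) by \eqref{eq:axes-cardinality}. The bound therefore reads
\[
\muniv(d,n,1)\ge \lceil d(n-1)/1\rceil = d(n-1),
\]
which is the claim.

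There is no genuine obstacle here. The only point to check is that the ceiling \(\lceil(d(n-1)+1)^{1/1}\rceil\) does not change its argument, which holds because \(d(n-1)+1\) is an integer.
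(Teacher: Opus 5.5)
Your proposal is correct and matches the paper's proof, which likewise obtains the bound by setting \(k=1\) in Corollary~\ref{cor:shared-janossy-main}. Your check that \(s=d(n-1)+1\) is already an integer, so the denominator \(s-(s-1)\) equals \(1\) (equivalently \(|A|=1\)), simply spells out the simplification the paper leaves implicit.
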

\begin{proof}
Setting \(k=1\) in Corollary~\ref{cor:shared-janossy-main} gives the Deep Sets lower bound.
\end{proof}

\subsection{Proof of Theorem~\ref{thm:indexed-janossy-main}}\label{subsec:indexed-lower-bound}
We will prove Theorem~\ref{thm:indexed-janossy-main} by contradiction. Suppose \(M\) satisfies
\[
M
\left[
\left\lceil(d(n-k)+1)^{1/k}\right\rceil^k
-
\left(\left\lceil(d(n-k)+1)^{1/k}\right\rceil-1\right)^k
\right] = M|A|
<
d(n-k),
\]
where $A$ is the axis set constructed in Section \ref{sec:k-janossy-properties}.

We will construct two disjoint compact sets \(E^+,E^-\subset K^n\) such that every indexed \(k\)-Janossy latent map \(\Phi\) of the form \eqref{eq:indexed-janossy} takes the same value at some pair \(\mathbf{x}^+\in E^+\), \(\mathbf{x}^-\in E^-\). Note that this is also the strategy followed in \cite{wagstaff2022universal}.

Assume that such sets have been constructed. Since \(E^+\) and \(E^-\) are compact and disjoint, define:
\[
    g(\mathbf{x})
    =
    \frac{\operatorname{dist}(\mathbf{x},E^-)}
    {\operatorname{dist}(\mathbf{x},E^+)+\operatorname{dist}(\mathbf{x},E^-)}.
\]
Then \(g\in C(K^n)\), \(g=1\) on \(E^+\), and \(g=0\) on \(E^-\). Now fix any indexed latent map \(\Phi\) and any continuous decoder \(\rho:\mathbb R^M\to\mathbb R\). By construction, there exist \(\mathbf{x}^+\in E^+\) and \(\mathbf{x}^-\in E^-\) such that
\begin{equation}\label{eq:collision-property}
\Phi(\mathbf{x}^+)=\Phi(\mathbf{x}^-).
\end{equation}

Hence \(\rho(\Phi(\mathbf{x}^+))=\rho(\Phi(\mathbf{x}^-))\), while \(g(\mathbf{x}^+)=1\) and \(g(\mathbf{x}^-)=0\). Therefore
\[
\begin{aligned}
2\sup_{\mathbf{x}\in K^n}|g(\mathbf{x})-\rho(\Phi(\mathbf{x}))|
&\ge
|g(\mathbf{x}^+)-\rho(\Phi(\mathbf{x}^+))|
+
|g(\mathbf{x}^-)-\rho(\Phi(\mathbf{x}^-))|
&\ge 1.
\end{aligned}
\]
So no decoder \(\rho\circ\Phi\) can approximate \(g\) uniformly to error below \(1/2\). 

It remains only to construct \(E^+\) and \(E^-\) with the stated collision property on the encoder $\Phi$. 
To build \(E^+\) and \(E^-\), we use the last \(n-k\) variables as the tail block and the first \(k\) variables as the base block. We write
\[
    \mathbf{x}_b=(\mathbf{x}_1,\ldots,\mathbf{x}_k)\in K^k,
    \qquad
    \mathbf{x}_e=(\mathbf{x}_{k+1},\ldots,\mathbf{x}_n)\in K^{n-k}.
\]
The tail block has dimension \(d(n-k)\), so we parametrize a small sphere inside it. Let
\[
    S^{d(n-k)-1}
    =
    \{\mathbf{u}=(\mathbf{u}_{k+1},\ldots,\mathbf{u}_n)\in(\mathbb R^d)^{n-k}:\|\mathbf{u}\|_2=1\},
\]
choose
\(
\mathbf{c}=\left(\frac12\mathbf{1}_d,\ldots,\frac12\mathbf{1}_d\right)\in K^{n-k}
\) and \(0<\varepsilon<1/2\), and define
\[
    \delta:S^{d(n-k)-1}\to K^{n-k},
    \qquad
    \delta(\mathbf{u})=\mathbf{c}+\varepsilon \mathbf{u},
    \qquad
    \delta(-\mathbf{u})=\mathbf{c}-\varepsilon \mathbf{u}.
\]
Recall the definitions of the set $G^k$ and $A$ given in Section \ref{sec:k-janossy-properties}. We choose closed sets \(B_1,\ldots,B_{d(n-k)+1}\) covering \(S^{d(n-k)-1}\), with no region containing an antipodal pair, i.e. \(B_r\cap(-B_r)=\varnothing\) for every \(r\). It is well-known that such a covering exists (see Appendix \ref{appendix:regions} for an explicit construction). In fact, an equivalent formulation of the Borsuk-Ulam theorem is that no such covering with fewer sets exists \cite{matouvsek2003using}. 
Now, choose an injection
\begin{equation}\label{eq:injective-map}
    \boldsymbol{\chi}:\{1,\ldots,d(n-k)+1\}\hookrightarrow G^k,
\end{equation}
so that \(\boldsymbol{\chi}(r)\) is the base label assigned to region \(B_r\). This is possible since
\[
    |G^k|
    =
    \left\lceil(d(n-k)+1)^{1/k}\right\rceil^k
    \ge d(n-k)+1.
\]
We then define the two candidate obstruction sets by
\begin{equation}\label{eq:Epm}
    E^+=\bigcup_{r=1}^{d(n-k)+1}\{(\boldsymbol{\chi}(r),\delta(\mathbf{u})):\mathbf{u}\in B_r\},
    \qquad
    E^-=\bigcup_{r=1}^{d(n-k)+1}\{(\boldsymbol{\chi}(r),\delta(-\mathbf{u})):\mathbf{u}\in B_r\}.
\end{equation}
These sets are compact, since they are finite unions of compact sets. They are also disjoint. Indeed, if a point lies in both \(E^+\) and \(E^-\), then for some \(r,q\) and \(\mathbf{u}\in B_r\), \(\mathbf{v}\in B_q\), 
\[
    (\boldsymbol{\chi}(r),\delta(\mathbf{u}))
    =
    (\boldsymbol{\chi}(q),\delta(-\mathbf{v})).
\]
Equality of the base blocks gives \(\boldsymbol{\chi}(r)=\boldsymbol{\chi}(q)\), hence \(r=q\) by injectivity. Equality of the tail blocks gives \(\delta(\mathbf{u})=\delta(-\mathbf{v})\), hence \(\mathbf{u}=-\mathbf{v}\). Thus \(\mathbf{u}\in B_r\) and \(-\mathbf{u}=\mathbf{v}\in B_r\), contradicting \(B_r\cap(-B_r)=\varnothing\). 

Now, for a fixed indexed latent map \(\Phi\), apply the Borsuk-Ulam theorem to the map
\begin{equation}\label{eq:borsuk-ulam-map}
    F:S^{d(n-k)-1}\to\mathbb R^{M|A|},
    \qquad
    F(\mathbf{u})=\bigl(\Phi(\mathbf{x}_b,\delta(\mathbf{u}))\bigr)_{\mathbf{x}_b\in A}.
\end{equation}
Since by assumption \(M|A|<d(n-k)\) we obtain antipodal points on the sphere such that \(F(\mathbf{u})=F(-\mathbf{u})\), equivalently
\[
    \Phi(\mathbf{x}_b,\delta(\mathbf{u}))=\Phi(\mathbf{x}_b,\delta(-\mathbf{u}))
    \qquad
    \text{for every }\mathbf{x}_b\in A.
\]
Since $\Phi$ is assumed to be indexed $k$-Janossy, we can now apply Proposition \ref{prop:axes-to-grid} to get
\[
    \Phi(\mathbf{x}_b,\delta(\mathbf{u}))=\Phi(\mathbf{x}_b,\delta(-\mathbf{u}))
    \qquad
    \text{for every }\mathbf{x}_b\in G^k.
\]

In particular, letting $r$ be an index such that $\textbf{u}\in B_r$, we have
\begin{equation}
    \Phi(\boldsymbol{\chi}(r),\delta(\mathbf{u}))=\Phi(\boldsymbol{\chi}(r),\delta(-\mathbf{u})).
\end{equation}
Now $\textbf{x}^+ := (\boldsymbol{\chi}(r),\delta(\mathbf{u}))\in E^+$ and $\textbf{x}^- := (\boldsymbol{\chi}(r),\delta(-\mathbf{u}))\in E^-$, and the previous equation gives \(\Phi(\mathbf{x}^+) = \Phi(\mathbf{x}^-)\). This completes the proof of Theorem \ref{thm:indexed-janossy-main}.

\section{Conclusion}
In this paper, we introduced a Borsuk-Ulam-based method for proving latent-dimension lower bounds for \(k\)-ary Janossy pooled permutation-invariant architectures. For Deep Sets, the method gives \(\muniv(d,n,1)\ge d(n-1)\). Together with the \(2nd+1\) upper bound implied by Dym--Gortler~\cite{dym2025low}, this shows that the universal latent dimension for Deep Sets on \(K=[0,1]^d\) has the correct \(nd\) scaling up to constants. When \(d=1\), our bound gives \(n-1\), recovering the known linear dependence on \(n\) from \cite{wagstaff2022universal} up to an additive constant of $1$.

For fixed \(k>1\), the same method gives \(\muniv(d,n,k)\gtrsim_k (d(n-k))^{1/k}\). This is the first non-trivial dimension-dependent lower bound for \(k\)-ary Janossy pooling, and shows that finite-order Janossy interactions still require a latent dimension which grows with both the ambient dimension and the number of input points.

Although we state the results for \(K=[0,1]^d\), the argument only uses the existence of a small \(d\)-dimensional cube inside the domain. Hence the same rates hold for any compact \(K\subset\mathbb R^d\) with nonempty interior.

The main limitation of the argument in Theorem \ref{thm:indexed-janossy-main} is that the obstruction is not sharp in all regimes. For Deep Sets, the method loses one input point because one block is used to separate the two compact obstruction sets; this explains the \(d(n-1)\) bound rather than a possible \(dn\) bound. For \(d=1\), this appears as the gap between \(n-1\) and the known sharp bound \(n\).

For \(k>1\), the exponent \(1/k\) is a limitation of the current construction. The proof uses \(k\) base variables to label the \(d(n-k)+1\) antipodal-free regions of the tail sphere; since a grid in the base block has size \(|G|^k\), the natural choice is \(|G|\approx (d(n-k))^{1/k}\). This leaves a substantial gap to the inherited \(O(nd)\) upper bound from Deep Sets. Determining the correct dependence on \(d\), \(n\), and \(k\), and proving matching bounds for finite-order Janossy pooling, remains an interesting open problem.
\bibliography{refs}
\bibliographystyle{plain}

\newpage
\appendix
\section{Proof of Proposition \ref{prop:elementary-embedding-obstruction} (ii)}
\begin{proof}
Let \(r=\min\{p,nd\}\). Choose pairwise disjoint closed cubes \(C_1,\ldots,C_n\subset(0,1)^d\) and affine homeomorphisms \(T_j:K\to C_j\). Then
\[
    L=\{(T_1\mathbf t_1,\ldots,T_n\mathbf t_n):(\mathbf t_1,\ldots,\mathbf t_n)\in K^n\}
\]
is a labeled copy of \(K^n\). On \(L\), a permutation-invariant function can agree with an arbitrary ordered function, because membership in \(C_j\) identifies the \(j\)-th input.

Define \(H:K^n\to\mathbb R^p\) by projecting \(K^n=[0,1]^{nd}\) onto its first \(r\) coordinates and placing these coordinates in the first \(r\) coordinates of \(\mathbb R^p\). By the labeled-copy construction, extend \(H\) to a continuous permutation-invariant map \(F:K^n\to\mathbb R^p\) satisfying \(F(T_1\mathbf t_1,\ldots,T_n\mathbf t_n)=H(\mathbf t_1,\ldots,\mathbf t_n)\).

If \(F=\rho\circ\Psi\) exactly for continuous maps \(\Psi:K^n\to\mathbb R^M\) and \(\rho:\mathbb R^M\to\mathbb R^p\), then restrict to an \(r\)-dimensional coordinate face of \(L\) on which \(H\) is the identity on \([0,1]^r\). On this face, \(\rho\circ\Psi\) is injective, so \(\Psi\) must also be injective. Thus \([0,1]^r\) admits a continuous injective map into \(\mathbb R^M\). By invariance of domain, this is impossible when \(M<r\). Hence \(M\ge r=\min\{p,nd\}\).
\end{proof}
\section{Proof of Lemma~\ref{lem:janossy-finite-difference}}\label{app:lemma1proof}
\begin{proof} 
We prove the claim by induction on \(k\). For \(k=1\), an indexed \(1\)-Janossy latent map has the form
\[
    \Phi(x_1,\ldots,x_n)=\sum_{i=1}^n\phi_i(x_i).
\]
Thus
\[
    \Phi(a_1+y_1,x_2,\ldots,x_n)-\Phi(a_1,x_2,\ldots,x_n)
    =
    \phi_1(a_1+y_1)-\phi_1(a_1),
\]
which is independent of \(x_2,\ldots,x_n\).

Assume the result holds for indexed \((k-1)\)-Janossy latent maps, and let \(\Phi\) be an indexed \(k\)-Janossy latent map. Fix \(a_1,y_1\), and define
\[
    \Phi'(x_2,\ldots,x_n)
    =
    \Phi(a_1+y_1,x_2,\ldots,x_n)
    -
    \Phi(a_1,x_2,\ldots,x_n).
\]
We first check that \(\Phi'\) is an indexed \((k-1)\)-Janossy latent map in the variables \(x_2,\ldots,x_n\). Indeed, consider one summand of \(\Phi\),
\[
    \phi_{i_1,\ldots,i_k}(x_{i_1},\ldots,x_{i_k}).
\]
If none of the indices \(i_1,\ldots,i_k\) is equal to \(1\), then this summand is unchanged in the two terms defining \(\Phi'\), and hence cancels. Therefore only summands containing \(x_1\) survive.

For such a surviving summand, every occurrence of \(x_1\) is replaced by either \(a_1+y_1\) or \(a_1\). After this replacement, the summand depends only on the remaining variables \(x_2,\ldots,x_n\), and it can involve at most \(k-1\) of them, because at least one of the original \(k\) slots was occupied by \(x_1\). Thus each surviving term is a continuous function of at most \(k-1\) variables among \(x_2,\ldots,x_n\). If it depends on fewer than \(k-1\) variables, we regard it as a \((k-1)\)-ary summand by adding dummy arguments. If it depends on no remaining variables, it is simply a constant summand. Hence \(\Phi'\) is an indexed \((k-1)\)-Janossy latent map in the variables \(x_2,\ldots,x_n\).

Now apply the induction hypothesis to \(\Phi'\), using \(x_2,\ldots,x_k\) as the first \(k-1\) base variables. We obtain that
\[
    \sum_{\eta_2,\ldots,\eta_k\in\{0,1\}}
    (-1)^{\eta_2+\cdots+\eta_k}
    \Phi'(a_2+\eta_2y_2,\ldots,a_k+\eta_ky_k,x_{k+1},\ldots,x_n)
\]
is independent of the tail variables \(x_{k+1},\ldots,x_n\). Substituting the definition of \(\Phi'\), this expression becomes
\[
\begin{aligned}
&\sum_{\eta_2,\ldots,\eta_k\in\{0,1\}}
(-1)^{\eta_2+\cdots+\eta_k}
\Phi(a_1+y_1,a_2+\eta_2y_2,\ldots,a_k+\eta_ky_k,x_{k+1},\ldots,x_n)\\
&\quad-
\sum_{\eta_2,\ldots,\eta_k\in\{0,1\}}
(-1)^{\eta_2+\cdots+\eta_k}
\Phi(a_1,a_2+\eta_2y_2,\ldots,a_k+\eta_ky_k,x_{k+1},\ldots,x_n).
\end{aligned}
\]
This is exactly the \(k\)-fold alternating difference in \eqref{eq:janossy-finite-difference}, up to an overall sign. Since multiplying by an overall sign does not affect dependence on the tail variables, the \(k\)-fold alternating difference is independent of \(x_{k+1},\ldots,x_n\). This completes the induction step.
\end{proof}

\begin{figure}[t]
\centering
\begin{tikzpicture}[scale=0.92, every node/.style={font=\small}]

\node at (0,1.55) {tail directions for \(x_2\in[0,1]^2\)};

\fill (0,0) circle (2pt);
\node[below right] at (0,0) {\(c\)};
\draw (0,0) circle (0.9);

\draw[thick, red] (20:0.9) arc (20:140:0.9);
\draw[thick, blue] (140:0.9) arc (140:260:0.9);
\draw[thick, green!60!black] (260:0.9) arc (260:380:0.9);

\node[red] at (90:1.15) {\(B_1\)};
\node[blue] at (200:1.15) {\(B_2\)};
\node[green!60!black] at (320:1.15) {\(B_3\)};

\draw[->, red] (0,0) -- (90:0.68);
\draw[->, blue] (0,0) -- (210:0.68);
\draw[->, green!60!black] (0,0) -- (330:0.68);

\node[align=center] at (0,-1.55)
{\(x_2=c\pm\varepsilon u\)\\
\(u\in S^1\)};

\end{tikzpicture}
\caption{Case \(d=2,n=2,k=1\). The tail block has dimension \(d(n-k)=2\), so the perturbation directions form \(S^1\). The circle is covered by antipodal-free regions \(B_1,B_2,B_3\).}
\label{fig:case-d2-n2-k1}
\end{figure}

\section{Construction of the regions $B_r$}\label{appendix:regions}
The following is an explicit construction of the covering $B_r$ of $S^{b-1}$:

First, choose vertices \(\mathbf{v}_1,\ldots,\mathbf{v}_{b+1}\in\mathbb R^{b}\) of a regular simplex centered at the origin. Each region consists of the directions \(\mathbf{u}\) whose largest inner product is attained at a given simplex vertex:
\begin{equation}\label{eq:sphere-regions}
    B_r
    =
    \left\{
    \mathbf{u}\in S^{b-1}:
    \langle \mathbf{u},\mathbf{v}_r\rangle
    =
    \max_{1\le q\le b+1}\langle \mathbf{u},\mathbf{v}_q\rangle
    \right\},
    \qquad
    r=1,\ldots,b+1.
\end{equation}
The regions \(B_r\) cover \(S^{b-1}\), and for every \(r\) we have \(B_r\cap(-B_r)=\varnothing\). Indeed, if \(\mathbf{u}\in B_r\) and \(-\mathbf{u}\in B_r\), then for every \(q\),
\[
    \langle \mathbf{u},\mathbf{v}_r\rangle\ge \langle \mathbf{u},\mathbf{v}_q\rangle
    \qquad\text{and}\qquad
    \langle \mathbf{u},\mathbf{v}_r\rangle\le \langle \mathbf{u},\mathbf{v}_q\rangle .
\]
Hence \(\langle \mathbf{u},\mathbf{v}_r-\mathbf{v}_q\rangle=0\) for every \(q\neq r\). Since the vectors \(\mathbf{v}_r-\mathbf{v}_q\), \(q\neq r\), span \(\mathbb R^{b}\), this forces \(\mathbf{u}=0\), contradicting \(\mathbf{u}\in S^{b-1}\).

The construction in the proof corresponds to the case $b = d(n-k)$. Figure~\ref{fig:case-d2-n2-k1} illustrates the construction in the two-dimensional case \(b=2\), where the sphere is \(S^1\) and the simplex has three vertices.

\newpage

\end{document}